\newtheorem{theorem}{Theorem}
\newtheorem{corollary}{Corollary}
\newtheorem{remark}{Remark}
\title{\LARGE \bf
Perception-Integrated Safety Critical Control via Analytic \\ Collision Cone Barrier Functions on 3D Gaussian Splatting
}
\author{Dario~Tscholl,
        Yashwanth~Nakka,
        and~Brian~Gunter
\thanks{*This work was not supported by any organization}
\thanks{All authors are with the Daniel Guggenheim School of Aerospace Engineering at the Georgia Institute of Technology, Atlanta, GA, 30332 USA 
        {\tt\small \{dtscholl3, ynakka3, bgunter8\}@gatech.edu}}%
}
\begin{document}

\maketitle
\thispagestyle{empty}
\pagestyle{empty}

\begin{abstract}
We present a perception-driven safety filter that converts each 3D Gaussian Splat (3DGS) into a closed-form forward collision cone, which in turn yields a first-order control barrier function (CBF) embedded within a quadratic program (QP). By exploiting the analytic geometry of splats, our formulation provides a continuous, closed-form representation of collision constraints that is both simple and computationally efficient. Unlike distance-based CBFs, which tend to activate reactively only when an obstacle is already close, our collision-cone CBF activates proactively, allowing the robot to adjust earlier and thereby produce smoother and safer avoidance maneuvers at lower computational cost.
We validate the method on a large synthetic scene with approximately 170k splats, where our filter reduces planning time by a factor of 3 and significantly decreased trajectory jerk compared to a state-of-the-art 3DGS planner, while maintaining the same level of safety. The approach is entirely analytic, requires no high-order CBF extensions (HOCBFs), and generalizes naturally to robots with physical extent through a principled Minkowski-sum inflation of the splats. These properties make the method broadly applicable to real-time navigation in cluttered, perception-derived extreme environments, including space robotics and satellite systems.
\end{abstract}

\section{INTRODUCTION}
Radiance field representations such as Neural Radiance Fields (NeRFs) \cite{Mildenhall_nerf_2020} and 3D Gaussian Splatting (3DGS) \cite{kerbl_3d_2023} have recently drawn a lot of attention in computer vision. These methods enable high-fidelity novel view synthesis, detailed 3D scene reconstruction and they naturally support tasks such as localization and tracking. A radiance field is a five-dimensional function that maps a 3D point and a viewing direction to an RGB color and an opacity value. NeRFs model this function as a continuous volumetric representation and learn its parameters implicitly through a neural network. Although NeRFs achieve high fidelity reconstructions, they suffer from large training times. 3DGS represents the scene explicitly as a set of Gaussian ellipsoids that capture local geometry and color. Compared to NeRFs, Gaussian splatting allows faster rendering and a more direct geometric representation via Gaussian primitives.

For robotic motion planning, radiance fields offer several advantages over traditional scene representations such as point clouds and occupancy grids. Point clouds and occupancy grids provide only discrete samples and lack inherent information about surface continuity or object connectivity. While signed distance fields (SDFs) can provide such continuity, they are computationally expensive to construct and may cause potential inaccuracies in collision checking \cite{eriksson_fast_nodate}. In contrast, radiance fields inherently encode a continuous description of the environment. This makes them differentiable and allows for a continuous scene interpretation independent of the density of the sensor data.

Recent efforts have begun to exploit 3DGS for motion planning. For example, the authors in \cite{chen_splat-nav_2025} model both robot and splats as ellipsoids and perform collision avoidance via ellipse intersection tests. A similar approach in \cite{chen_safer-splat_2025} derives a control barrier function (CBF) from the shortest Euclidean distance between the robot and the Gaussian splats.
Other work has discretized the world into a grid and computed the grid node cost by using a potential field–like function based on the opacity information of a 3D Gaussian \cite{wang_risk-aware_2024}. A chance-constraint formulation has been proposed by \cite{tagliasacchi_volume_2022} where the authors normalize the Gaussian splats and used a transmittance function to formulate an upper-bound for the probability of collision.

In this letter, we argue that the geometric properties of 3D Gaussian splats have not yet been fully exploited for motion planning, as prior works either only rely on a distance-based formulation or employ a fully probabilistic approach.
Our key observation is that given the quadratic shape of 3D Gaussian splats, we can convert each Gaussian into a closed-form forward collision cone. A collision cone characterizes a set of relative velocities that would ultimately lead to collision. Taking the complement of the collision cone yields a control barrier function (CBF) that defines a collision free set, hence provides formal safety guarantees. Compared to distance-based CBFs, collision cone CBFs do not require high-order extensions (HOCBF) \cite{xiao_control_2019}, i.e. they are computationally inexpensive, and exhibit greater robustness in dynamic environments \cite{tayal_control_2024}. Moreover, because collision cone constraints activate proactively as the robot approaches an obstacle, they enable smoother obstacle avoidance than distance-based methods, which tend to be more reactive.

Beyond efficiency, a 3DGS-based formulation eliminates the need for conservative obstacle inflation. In traditional collision cone applications, obstacles are approximated using simple geometric shapes, such as ellipsoids or spheroids, because a closed-form cone cannot generally be derived for arbitrary non-quadratic shapes \cite{chakravarthy_generalization_2012, goswami_collision_2024}. Therefore, 3DGS enables both safe and efficient motion planning.
%
The main contributions of this letter are summarized as follows:
\begin{itemize}
    \item \textit{Analytical 3DGS-based collision cone formulation:} This work introduces a collision cone–based control barrier function derived directly from 3D Gaussian Splatting, enabling exact analytical cones without conservative obstacle inflation or approximations.
    \item \textit{Computationally-efficient safety verification:} Compared to existing planners, the proposed collision cone CBF is computationally inexpensive due to its first-order nature, i.e. no need for high-order CBF extensions. Additionally, the collision cone formulation enables proactive constraint activation and smooth obstacle avoidance.
\end{itemize}

\section{PRELIMINARIES}
    \subsection{Problem Statement}
    In this letter we consider the navigation problem from an initial configuration $x(0) = x_0$ to a specified goal configuration $x(T) = x_f$ such that there are no collisions at any time $t =[0,T]$. For this, we assume a system that can be represented in the form of a nonlinear affine control system
    \begin{equation} \label{eq_affine_control_system}
        \dot{x} = f(x) + g(x)u, \qquad x(0) = x_0,
    \end{equation}
    where $x \in \mathcal{X} \subset \mathbb{R}^n$ represents the state of the system and $u \in \mathcal{U} \subset \mathbb{R}^m$ is the control input. The functions $f: \mathbb{R}^n \rightarrow \mathbb{R}^n$ and $g : \mathbb{R}^n \rightarrow \mathbb{R}^{n \times m}$ are continuous and locally Lipschitz. For simplicity and computational efficiency, a drone represented as a double-integrator model will be considered throughout this letter, however, the proposed method is valid for all systems that satisfy (\ref{eq_affine_control_system}). 
    %
    %
    \subsection{Safety and Control}
    To formalize safety, we introduce Barrier functions  \cite{xiao_control_2019, lindemann_control_2019}. A continuously differentiable function $h:\mathbb{R}^n \rightarrow \mathbb{R}$ is a barrier function for system (\ref{eq_affine_control_system}) if there exists a class-$\mathcal{K}$ function $\alpha$ such that
    \begin{equation} \label{eq_barrier_function}
        \dot{h}(x) + \alpha \bigl( h(x) \bigl) \geq 0 \qquad \text{for all} \, x \in \mathcal{C}.
    \end{equation}
    A continuous function $\alpha : [0,a) \rightarrow [0, \infty) \, \text{with} \, a > 0$ is said to belong to class-$\mathcal{K}$ if it is strictly increasing and $\alpha(0) = 0.$ \cite{xiao_control_2019}. Barrier functions were first used in optimization and are now commonly used together with Lyapunov-like functions to establish safety, avoidance or eventuality properties \cite{ames_control_2016}. The natural extension of a barrier function to a system with control inputs is a Control Barrier Function (CBF) \cite{wieland_constructive_2007, zeng_safety-critical_2021, ames_control_2019}. For safety-critical control, we consider a safe set $\mathcal{C}$ defined as the superlevel set of a continuously differential function $h:x \in \mathcal{X}$:
    \begin{equation} \label{eq_safe_set}
        \mathcal{C} = \{x \in \mathcal{X} \, | \, h(x) \geq 0\}.
    \end{equation}
    The function $h$ is a control barrier function if $\partial h/ \partial x \neq 0$ for all $x  \in \partial C$ and there exists a class-$\mathcal{K}$ function $\alpha$ such that for the control system (\ref{eq_affine_control_system}) $h$ satisfies
    \begin{equation} \label{eq_cbf_def}
          \underset{u \in \mathcal{U}}{\text{sup}} \,[L_fh(x) + L_gh(x)u]  \geq -\alpha \bigl( h(x) \bigl) \quad \text{for all} \, x \in \mathcal{C}.
    \end{equation}
    This definition allows us to consider the set of all stabilizing controllers $K_{cbf}$ for every point $x \in  \mathcal{X} $
    \begin{equation} \label{eq_safe_control_set}
            K_{cbf}(x) := \{ u \in \mathcal{U} \, | \, L_fh(x) + L_gh(x)u + \alpha(h(x)) \geq 0 \},
    \end{equation}
    where, for a control system given in (\ref{eq_affine_control_system}), any locally Lipschitz controller $u \in K_{cbf}(x)$ renders $\mathcal{C}$ forward invariant which implies that the control system is safe \cite{ames_control_2019}.
    Aside from safety guarantees, CBFs are attractive for safety-critical systems because the constraint formulation is affine in the control input, allowing fast real-time computation. CBFs are often used in combination with quadratic programs (QPs) to synthesize safe controllers for robotic systems. The quadratic program for a double-integrator system subject to the CBF constraint in (\ref{eq_cbf_def}) is given by
    \begin{equation}
        \begin{aligned}
            \min_{u} &\quad \|u - \bar{u}\|_2^2 \\
            \textrm{s.t.} &\quad L_fh(x) + L_gh(x)u  \geq -\alpha \bigl( h(x) \bigl)
        \end{aligned}
    \end{equation}
    where $\bar{u}$ denotes a control reference, $L_fh$ and $L_gh$ are the Lie derivatives with respect to the $f$ and $g$ functions, and $\alpha$ is a class-$\mathcal{K}$ function. 
    %
    %
    \subsection{3D Gaussian Splatting}
    3D Gaussian Splatting (3DGS) represents a scene as a set of anisotropic Gaussian primitives. Each primitive is parameterized by a mean position $\mu \in \mathbb{R}^3$, a covariance matrix $\Sigma \in \mathbb{S}_{++}$, an opacity $\alpha \in [0,1]$, and spherical harmonic coefficients encoding view-dependent color. The Gaussian density is defined as $G(x) := exp\bigl( -\frac{1}{2}(x)^\top \Sigma^{-1}(x)\bigl)$, which corresponds to an ellipsoidal volumetric kernel centered at $\mu$ used for the blending/rendering process. Unlike a traditional covariance matrix, the covariance matrix used in 3DGS describes the spatial configuration of a splat and is given by 
    \begin{equation} \label{eq_cov_matrix}
        \Sigma = RSS^\top R^\top,
    \end{equation}
    where $R \in \mathrm{SO}(3)$ is a rotation matrix defined by quaternions and $S=diag(s_1, s_2, s_3)$ with $s_i > 0$ is a diagonal scaling matrix. During optimization, both geometric $(\mu,\Sigma)$ and photometric (opacity, spherical harmonics) parameters are updated via gradient descent. An adaptive density control mechanism inserts, clones, or splits Gaussians to refine under- and over-reconstructed regions, yielding a compact yet expressive representation of the scene. For rendering, each Gaussian is projected into the image plane, where anisotropic splats are rasterized and composited using visibility-aware $\alpha$-blending.

    In this letter, we want to use 3DGS to encode collision avoidance/safety constraints into the trajectory tracking problem. Therefore, for collision avoidance, we only care about the geometric properties of a splat. The geometry of a 3D Gaussian splat can be represented by its confidence ellipsoid
    \begin{equation} \label{eq_gaussian_ellipsoid}
        \mathcal{E} = \{x \in \mathbb{R}^3 \, | \, (x-\mu)^\top\Sigma^{-1}(x-\mu) \leq c^2\},
    \end{equation}
    where $c^2 = \chi^2_{3,0.99}$ represents the 99th percentile of the chi-squared distribution with 3 degrees of freedom. 
    %
    %
\section{MAIN RESULTS}
    %
    \subsection{3DGS-Based Collision Cone}
    \begin{theorem}[Collision cone for a 3D Gaussian splat] \label{theorem_cc}
        Let $\Sigma \in \mathbb{S}_{++}^3$ be positive definite and set $A=\Sigma^{-1}$. 
        For $c>0$, define the Gaussian splat confidence ellipsoid
        \[
        \mathcal{E} = \bigl\{x \in \mathbb{R}^3 \;\big|\; (x-\mu)^\top A (x-\mu) \le c^2 \bigr\}.
        \]
        Let a point robot be located at $p \in \mathbb{R}^3$ and move with constant velocity 
        $v \in \mathbb{R}^3$ along the ray $x(t)=p+t v$, $t \ge 0$. 
        Define the line-of-sight vector $r := \mu - p$, and assume the robot starts outside the ellipsoid, i.e.\ $r^\top A r > c^2$. Then the following statements are equivalent:
        \begin{enumerate}
            \item[\emph{(i)}] There exists $t \ge 0$ such that $x(t)\in\mathcal{E}$.
            \item[\emph{(ii)}] The velocity $v$ satisfies the forward collision cone conditions
            \begin{subequations}\label{eq:cone_conditions}
            \begin{align}
               (v^\top A v)(r^\top A r - c^2) - (r^\top A v)^2 &\le 0, \label{eq:cone_main}\\
               r^\top A v &\ge 0. \label{eq:cone_approach}
            \end{align}
            \end{subequations}
        \end{enumerate}
    \end{theorem}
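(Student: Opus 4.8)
The plan is to reduce the geometric collision question to a sign analysis of a scalar quadratic in the time parameter $t$. First I would substitute the ray $x(t)=p+tv$ into the defining inequality of $\mathcal{E}$. Since $r=\mu-p$, we have $x(t)-\mu = tv-r$, so the membership $x(t)\in\mathcal{E}$ is equivalent to $\phi(t)\le 0$, where
\[
\phi(t) := (tv-r)^\top A\,(tv-r) - c^2 = a\,t^2 - 2b\,t + \gamma,
\]
with $a := v^\top A v$, $b := r^\top A v$, and $\gamma := r^\top A r - c^2$. Statement (i) then reads: there exists $t\ge 0$ with $\phi(t)\le 0$, and the entire proof becomes an exercise in locating the roots of this parabola on the nonnegative axis.

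Next I would record the structural facts that drive everything. Because $A\succ 0$ and the robot is genuinely moving ($v\ne 0$), we have $a>0$, so $\phi$ is a strictly convex upward parabola; the standing hypothesis $r^\top A r > c^2$ gives $\gamma>0$, hence $\phi(0)=\gamma>0$, confirming the robot starts strictly outside. The vertex sits at $t^\star=b/a$ with minimum value $\phi(t^\star)=(a\gamma-b^2)/a$. The point is that condition \eqref{eq:cone_main} is exactly $a\gamma-b^2\le 0$, i.e.\ a nonnegative discriminant (equivalently $\phi(t^\star)\le 0$), while \eqref{eq:cone_approach} is exactly $b\ge 0$, i.e.\ $t^\star\ge 0$.

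With this dictionary the equivalence falls out in two short steps. For (i)$\Rightarrow$(ii): if $\phi(t_0)\le 0$ for some $t_0\ge 0$ then the minimum of $\phi$ is nonpositive, giving $a\gamma-b^2\le 0$, which is \eqref{eq:cone_main}; and if the vertex satisfied $t^\star\le 0$ then $\phi$ would be nondecreasing on $[0,\infty)$ and bounded below there by $\phi(0)>0$, contradicting $\phi(t_0)\le 0$, so $t^\star=b/a>0$ and hence $b>0$, which is \eqref{eq:cone_approach}. For (ii)$\Rightarrow$(i): from $a>0$ and $\gamma>0$ we get $a\gamma>0$, so $b^2\ge a\gamma$ forces $b>0$ and $0\le\sqrt{b^2-a\gamma}<b$; thus the smaller root $t_-=(b-\sqrt{b^2-a\gamma})/a$ is strictly positive with $\phi(t_-)=0$, exhibiting a nonnegative collision time.

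The main obstacle — really the only delicate point — is correctly coupling the two scalar conditions to the \emph{location} of the roots: a nonnegative discriminant alone is insufficient, since the collision interval $[t_-,t_+]$ could lie entirely on the negative axis. The role of \eqref{eq:cone_approach} is precisely to rule this out by forcing the vertex (and, together with $\phi(0)>0$, both roots) onto the nonnegative side. I would also flag the degenerate case $v=0$: there $a=b=0$, so (ii) holds vacuously while (i) fails, meaning the equivalence tacitly requires $v\neq 0$; this is natural, as a stationary robot cannot collide along its constant-velocity trajectory, and I would state it as a standing assumption.
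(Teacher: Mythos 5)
Your proof is correct and takes essentially the same route as the paper's primary argument: substitute the ray to obtain the convex quadratic $\phi(t)=at^2-2bt+\gamma$, identify \eqref{eq:cone_main} with a nonpositive minimum and \eqref{eq:cone_approach} with a nonnegative vertex $t^\star=b/a$, and read off both implications. If anything you are slightly more careful than the paper, which asserts $a=v^\top A v>0$ without excluding the degenerate case $v=0$ (where, as you note, (ii) holds vacuously while (i) fails, so $v\neq 0$ is a genuinely needed standing assumption) and states that ``feasibility requires $t^\star\ge 0$'' without the explicit root-location argument you supply.
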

    \begin{proof}[Proof via quadratic minimization]
        Substituting the ray $x(t)=p+t v$ into the ellipsoid inequality gives the quadratic
        \begin{equation} \label{eq_ray-ellipsoid_problem}
            \phi(t) = (r-t v)^\top A (r-t v) - c^2 = a t^2 - 2 b t + d,
        \end{equation}
        where $a = v^\top A v >0$, $b = r^\top A v$, and $d = r^\top A r - c^2 >0$.
        Thus, a collision occurs iff there exists $t \ge 0$ such that $\phi(t)\le 0$.
        
        Since $\phi$ is convex, its unique minimizer is
        \[
        t^\star = \frac{b}{a},
        \]
        with minimum value
        \[
        \phi_{\min} = d - \frac{b^2}{a}
        = \frac{(v^\top A v)(r^\top A r - c^2) - (r^\top A v)^2}{v^\top A v}.
        \]
        Necessity: If $\exists\, t_0\ge 0$ with $\phi(t_0)\le 0$, then $\phi_{\min}\le 0$, which yields \eqref{eq:cone_main}. Furthermore, feasibility requires $t^\star \ge 0$, i.e.\ $r^\top A v \ge 0$, which is \eqref{eq:cone_approach}.
        Sufficiency: If \eqref{eq:cone_main} and \eqref{eq:cone_approach} hold, then $t^\star \ge 0$ and $\phi_{\min}\le 0$, hence $\phi(t^\star)\le 0$, proving a collision. 
    \end{proof}
    \begin{proof}[Alternative proof via whitening]
        To show equivalence with classical collision cones, consider the factorization $\Sigma = R S S^\top R^\top$ where $R \in \mathrm{SO}(3)$ and $S=\mathrm{diag}(s_1,s_2,s_3)$. Define the whitening matrix
        \begin{equation} \label{eq_whiten_matrix}
            L := S^{-1} R^\top,
        \end{equation}
        and the whitened variables $\tilde r := L r$, $\tilde v := L v$. The ellipsoid $\mathcal E$ maps to the sphere
        \[
        \tilde{\mathcal E} = \{ y \in \mathbb R^3 \mid \|y\|^2 \le c^2 \}.
        \]
        The collision condition becomes
        \[
        \phi(\lambda) = \|\tilde r - \lambda \tilde v\|^2 - c^2 \le 0,\quad \exists \lambda \ge 0.
        \]
        This expands to the classical Lorentz-cone inequalities
        \begin{subequations}\label{eq:lorentz}
        \begin{align}
           \|\tilde v\|^2 (\|\tilde r\|^2 - c^2) - (\tilde r^\top \tilde v)^2 &\le 0, \\
           \tilde r^\top \tilde v &\ge 0.
        \end{align}
        \end{subequations}
        Writing $\tilde v$ in spherical coordinates yields
        \[
        \|\tilde r\|^2 (V_\theta^2 + V_\phi^2) \le c^2 (V_\theta^2 + V_\phi^2 + V_r^2), \quad V_r < 0,
        \]
        which are exactly the generalized 3D collision cone conditions of Chakravarthy \& Ghose \cite{chakravarthy_generalization_2012}. Since $L$ is nonsingular, inequalities \eqref{eq:lorentz} are equivalent to \eqref{eq:cone_conditions} in the original coordinates. Hence the collision cone derived for ellipsoids is consistent with the known analytical conditions for a point-sphere system.
    \end{proof}
    Conceptually, (\ref{eq:cone_main}) represents a pair of opposite cones in velocity space with (\ref{eq:cone_approach}) disregarding the cone that points away from the obstacle, leaving a single, forward collision cone. An illustration of the described concept is shown in Figure \ref{fig_collision_cone}.

    \begin{figure}[htp]
        \includegraphics[width=\columnwidth] {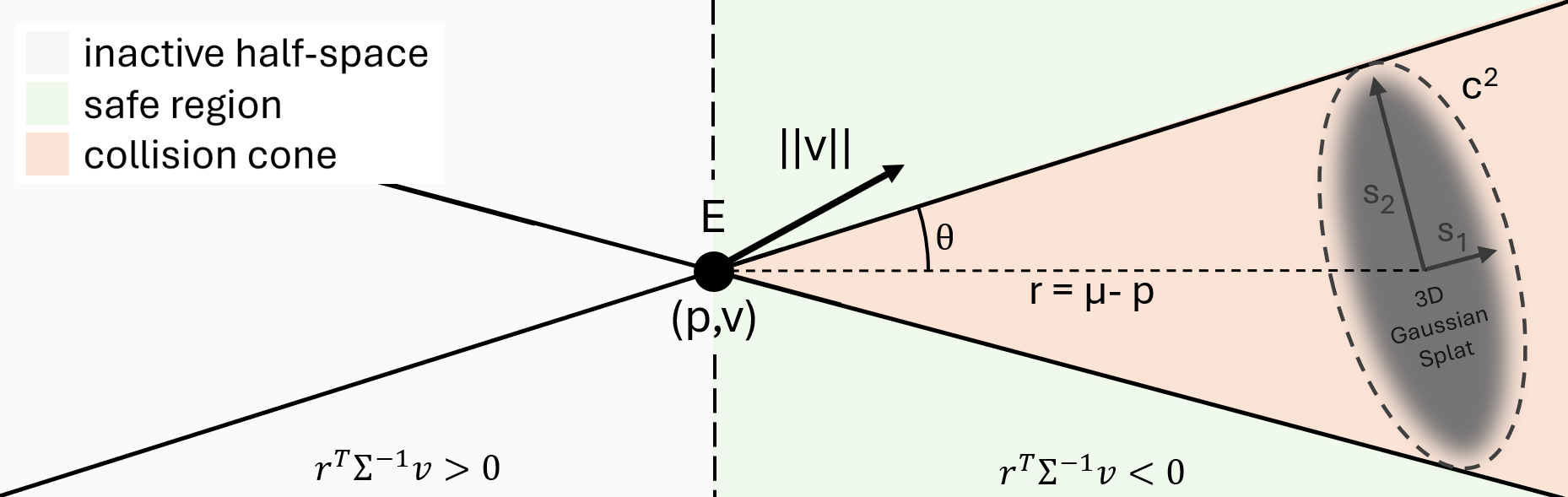} 
        \caption{2D illustration of a Gaussian splat collision cone. A robot $E$, located at $p$ with velocity $v$, forms a collision cone with a 3D Gaussian splat located at $\mu$. The shape of the 3D Gaussian splat is given by the elements of the diagonal scaling matrix $S$ and the confidence level denoted by $c^2$.}
        \label{fig_collision_cone}
    \end{figure}
    \begin{figure*}[htp] 
        \includegraphics[width=\textwidth,height=5.8cm] {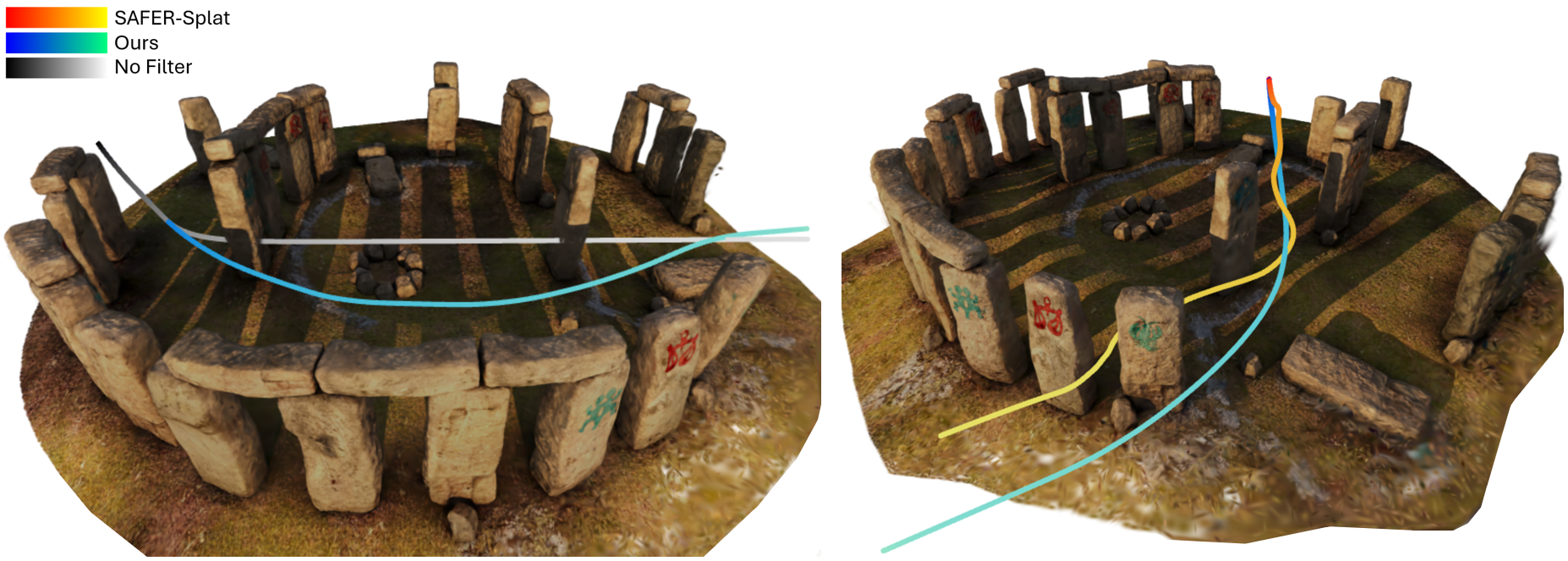} 
        \caption{Qualitative comparison between SAFER-Splat and our proposed method. The left image shows two trajectories with the same start and goal configurations. For the gray trajectory that is controlled by the PD controller, the safety filter was turned off. Ours had the safety filter turned on. It can be seen that the robot is able to safely and smoothly avoid any obstacles. The image on the right shows a comparison between our method and SAFER-Splat. It can be seen that our method is less reactive to obstacles due to proactive constraint activation and thus generates a smoother trajectory.}
        \label{fig_quali_difference}
    \end{figure*}
    %
    \subsection{Collision Cone CBF} \label{sec_cbf_qp}
    Since the complement of the collision cone condition defines the admissible safe set, it yields a natural candidate for a control barrier function, expressed as
    \begin{equation} \label{eq_cbf_candidate}
        h(p,v) = \underbrace{(v^\top Av)}_\beta \underbrace{(r^\top Ar - c^2)}_\gamma - \underbrace{(r^\top Av)^2}_{\delta^2} \geq 0,
    \end{equation}
    with the safe set $\mathcal{C}_i := \{(p,v) : h_i(p,v) \geq 0 \}$ defined for the $i$th splat. The scalar variables $\beta$, $\gamma$ and $\delta$ are introduced to simplify notation.
    \begin{corollary}
        Let $\Sigma \in \mathbb{S}^3_{++}$ be positive definite and $c>0$. For a point robot at $p\in\mathbb{R}^3$ with constant velocity $v\in\mathbb{R}^3$ and initially outside the ellipsoid, i.e. $r^\top A r>c^2$, set $r:=\mu-p$.
        Define the safe set of a splat
        \[
        \mathcal{C}_i = \{(x,p) \in \mathcal{X} \, | \, h_i(p,v) \geq 0\}        .    
        \]
        Then the corresponding control barrier function candidate
        \[
        h_i(p,v) := (v^\top Av)(r^\top Ar - c^2) - (r^\top Av)^2 \geq 0,
        \]
        is exactly the complement of the forward collision cone of Theorem \ref{theorem_cc}.
    \end{corollary}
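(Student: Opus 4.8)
The plan is to treat the corollary as a direct set-theoretic repackaging of Theorem~\ref{theorem_cc}, since no new dynamics or estimates enter. First I would observe that the scalar $h_i(p,v)$ defined in \eqref{eq_cbf_candidate} is, by construction, exactly the left-hand side of the main cone inequality \eqref{eq:cone_main}; hence $h_i(p,v)\le 0$ is logically identical to \eqref{eq:cone_main}, and $h_i(p,v)\ge 0$ is its (non-strict) negation. With this identification the corollary reduces to reading the equivalence (i)$\Leftrightarrow$(ii) of Theorem~\ref{theorem_cc} in contrapositive form.

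Next I would make the complement explicit. By Theorem~\ref{theorem_cc}, the set of velocities that produce a collision — the forward collision cone — is precisely $\mathcal{K}_i := \{(p,v) : h_i(p,v)\le 0 \text{ and } r^\top A v \ge 0\}$, the intersection of the quadric condition \eqref{eq:cone_main} with the approach condition \eqref{eq:cone_approach}. I would then recall from the quadratic-minimization proof that $h_i(p,v) = (v^\top A v)\,\phi_{\min}$ with $v^\top A v > 0$, so that $h_i(p,v)\ge 0$ holds iff the minimum of $\phi$ over the whole line $p+tv$ ($t\in\mathbb{R}$) is nonnegative, i.e.\ iff the infinite line — and a fortiori the forward ray — never meets $\mathcal{E}$. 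This shows directly that $\mathcal{C}_i=\{h_i\ge 0\}$ contains only collision-free states and that its boundary $\{h_i=0\}$ is exactly the cone surface, so within the approach region the safe set is the set-theoretic complement of $\mathcal{K}_i$.

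The step I expect to be the main obstacle is reconciling the \emph{single} forward cone of Theorem~\ref{theorem_cc} with the \emph{symmetric} zero-superlevel set of $h_i$. Geometrically \eqref{eq:cone_main} carves out a pair of opposite nappes, and \eqref{eq:cone_approach} discards the rearward one, whereas the inequality $h_i\ge 0$ excludes \emph{both} nappes at once. I would resolve this by restricting the comparison to the physically relevant approach half-space $\{r^\top A v \ge 0\}$, on which \eqref{eq:cone_approach} is automatic and the rearward nappe is absent, so that $\mathcal{C}_i \cap \{r^\top A v \ge 0\}$ coincides exactly with the complement of $\mathcal{K}_i$. For the remaining receding velocities ($r^\top A v < 0$) I would note that these never collide, so declaring them safe via $h_i\ge 0$ is consistent with — at worst mildly conservative relative to — the collision-cone description; this is precisely what makes $h_i\ge 0$ a sound, and on the approach region exact, complement of the forward collision cone.
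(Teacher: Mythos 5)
Your proposal is correct, and it is in fact more careful than the paper itself, which supplies no proof of this corollary at all: the text simply asserts that ``the complement of the collision cone condition defines the admissible safe set,'' i.e.\ it reads the corollary off Theorem~\ref{theorem_cc} exactly as in your first paragraph. Your identity $h_i(p,v) = (v^\top A v)\,\phi_{\min}$ with $v^\top A v > 0$ is the same mechanism as the paper's quadratic-minimization proof of the theorem, so up to that point your route and the paper's implicit one coincide. Where you genuinely add content is your third paragraph: you are right that $\{h_i \ge 0\}$ is \emph{not} literally the set-theoretic complement of the forward cone $\{h_i \le 0\} \cap \{r^\top A v \ge 0\}$, since the rearward nappe (where $h_i < 0$ and $r^\top A v < 0$) consists of collision-free states that the superlevel-set condition nonetheless excludes. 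Your resolution --- exactness on the approach half-space $\{r^\top A v \ge 0\}$, where \eqref{eq:cone_approach} is automatic, and conservative soundness on the receding half-space, where $t^\star = b/a < 0$ together with $\phi(0) = d > 0$ and convexity of $\phi$ gives $\phi(t) > 0$ for all $t \ge 0$, hence no forward collision --- is the correct reading, and it makes precise what the paper's word ``exactly'' glosses over. Two minor points worth tightening: the degenerate case $v = 0$ yields $h_i = 0$ (trivially safe, but the theorem's proof presupposes $v^\top A v > 0$, so it deserves a separate sentence), and the zero level set $\{h_i = 0\}$ contains \emph{both} tangency nappes, not only the forward cone surface, so your claim that the boundary of $\mathcal{C}_i$ ``is exactly the cone surface'' should likewise be restricted to the approach half-space; neither issue affects the substance of your argument.
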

    \begin{remark}[Global Safe Set in a Scene]
        So far, only one Gaussian splat has been considered, but in a scene with thousands of Gaussian splats, the global safe set is simply given by the intersection $\mathcal{C}:=\bigcap\limits_{i \in \mathcal{I}} \mathcal{C}_i$.
    \end{remark}
    \begin{remark}[Dynamic Environments]
        The proposed CBF is derived under the assumption that both robot and obstacle velocities remain constant within a control loop. In our implementation, we consider static obstacles by setting their velocity to zero. However, in dynamic environments, obstacle velocities can be modeled as piecewise-continuous over the control interval $\Delta t$, ensuring that the CBF formulation remains valid. In this case, the obstacle mean $\mu$ becomes time-dependent, and the relative velocity with respect to the robot at time step $k$ is given by $v_k = \dot{\mu}_k - \dot{p}_k$.
    \end{remark}
    Since (\ref{eq_cbf_candidate}) is a function of position and velocity, a standard first-order CBF as defined in (\ref{eq_cbf_def}) is sufficient assuming the control input only affects the acceleration term. Let $w(x) := \gamma Av - \delta Ar \in \mathbb{R}^3$ with $\gamma$ and $\delta$ from (\ref{eq_cbf_candidate}), the Lie derivatives in (\ref{eq_cbf_def}) become
    \begin{equation}
        \begin{aligned} \label{eq_lie_derivatives}
            L_f h(p,v) &= 2 w(x)^\top f_v(x), \\
            L_gh(p,v)u &= 2 w(x)^\top g_v(x)u, 
        \end{aligned}
    \end{equation}
    where $f_v$ and $g_v$ stem from the control affine representation of the velocity dynamics in (\ref{eq_affine_control_system}), expressed as $\dot{v}=f_v(x) +g_v(x)u$. Note that $\nabla_ph^\top \dot{p} = 2 ( \beta Ar - \delta Av)v = 0$ and therefore doesn't show up in (\ref{eq_lie_derivatives}). With a linear class-$\mathcal{K}$ function $\alpha (h) = p_k h$ with $p_k >0$, the full CBF constraint can be written as
    \begin{equation} \label{eq_cbf_cc}
         \bigl( g_v(x)^\top w(x) \bigr)^\top u \geq - \frac{p_k}{2} h(p,v)- w(x)^\top f_v(x).
    \end{equation}
    With $a(x) = g_v(x)^\top w(x)$ and $b(x) = - \frac{p_k}{2} h(p,v)- w(x)^\top f_v(x)$, the constraint in (\ref{eq_cbf_cc}) is feasible at $x$ iff the supremum of $a(x)^\top u$ over a convex compact $\mathcal{U}$ is at least the threshold $b(x)$, hence iff the support function $\sigma_\mathcal{U} := sup_{u\in \mathcal{U}} \,a^\top u$ satisfies $\sigma_\mathcal{U}\bigl (a(x) \bigl)  \geq b(x)$, which is equivalent to (\ref{eq_cbf_def}).
    %
    For a more detailed discussion on the influence of choosing a different class-$\mathcal{K}$ function, we refer the reader to \cite{xiao_control_2019}.
    %
    %
    \begin{remark}[Physical Extent of the Robot] \label{ref_dimens}
        Up until this point, the robot has been assumed to be a point object. One way to consider the physical extent of the robot is to vary the level of uncertainty by increasing $c^2$, as larger values inflate the volume of the confidence ellipsoid associated with each Gaussian. 
        
        Because it is difficult to correlate an uncertainty level to a volumetric variation, another way to account for the robot's size is to model the robot as a sphere with Euclidean radius $\rho$. The ray-ellipsoid collision check from (\ref{eq_ray-ellipsoid_problem}) then becomes the intersection of the ray with the Minkowski sum of the confidence ellipsoid and the robot sphere. 
        Using the whitening matrix $L$ defined in (\ref{eq_whiten_matrix}), the confidence ellipsoid $\mathcal{E}_c$ becomes a sphere $L(\mathcal{E}_c) = \mathcal{S}_c$. In whitened space, the robot sphere $\mathcal{S}_r$ becomes an ellipsoid $\mathcal{E}_r := \{y \enspace | \enspace y^\top (S^\top S) y \leq \rho^2 \}$. Therefore, the robot's radial function along a unit direction $n \in \mathbb{S}^2$ (in whitened coordinates) is $r_{\mathcal{E}_r} = \rho/\sqrt{n^\top(S^\top S)n}$. The Minkowski sum of $r_{\mathcal{E}_r}$ and $r_{\mathcal{S}_c}$ can then be written as $c_{M} =  r_{\mathcal{E}_r} \oplus r_{\mathcal{S}_c} = c + \rho/\sqrt{n^\top(S^\top S)n}$, where $c$ is the radius of the confidence sphere $\mathcal{S}_c$. As can be seen, $c_M$ is direction dependent. The relevant direction in whitened space is the unit vector $n$ orthogonal to the velocity and pointing from the ray to the splat center. Hence, $n = \bar{n}/\|\bar{n}\|$ where $\bar{n} = (I-\tilde{v}\tilde{v}^\top)\tilde{r}$. While this is exact, sometimes it is enough to use a simpler, more conservative approach where $n$ is set constant, in which case $\mathcal{E}_r$ becomes a subset of a sphere with radius $r_{\mathcal{E}_r} = \rho/s_{min}$. 

        Therefore, when considering the physical extent of the robot, $c^2$ in (\ref{eq_cbf_candidate}) can simply be replaced with $c_M^2$
        \[
            (v^\top Av )(r^\top Ar - c_M^2) - (r^\top Av)^2 \leq 0.
        \]
        If one wants to express everything in the original coordinates, a transformation of $c_M$ is necessary. The transformation of $c_M$ to the original coordinates is provided in the appendix. 
    \end{remark}
%
%
\section{SIMULATION SETUP AND RESULTS} \label{sec_simulation}
In this section, we compare our method to SAFER-Splat \cite{chen_safer-splat_2025}, a state-of-the-art 3DGS-based planner that is the most similar to ours. For SAFER-Splat, the authors designed a CBF that minimizes the distance between the robot position $p$ and some point $y*$ on the ellipsoid. By solving a robot-ellipsoid distance program they derived closed-form gradients and Hessians of this distance, and incorporated the resulting CBF in a quadratic program. In the following, we provide a qualitative and quantitative comparison between SAFER-Splat and our proposed method.

\textit{Setup:} 
For the control problem, SAFER-Splat assumed an input affine control system as given in (\ref{eq_affine_control_system}). For the robot, they chose to use a quadcopter that was modeled as a double-integrator system, as shown in (\ref{eq_double_int}).
\begin{equation} \label{eq_double_int}
    \dot{x} = 
    \begin{bmatrix}
        \dot{p} \\
        \dot{v}
    \end{bmatrix} =
    \begin{bmatrix}
        0 & I \\
        0 & 0
    \end{bmatrix}
    \begin{bmatrix}
        p \\
        v
    \end{bmatrix} + 
    \begin{bmatrix}
        0 \\
        I
    \end{bmatrix} u,
\end{equation}
where $I$ denotes the identity matrix. Based on this state space, the $f_v(x)$ and $g_v(x)$ functions for our CBF formulation are $0$ and $I$, respectively. Plugging these values into (\ref{eq_cbf_cc}) yields the collision cone CBF for a double-integrator system given by
\begin{equation} \label{eq_double_int_cbf}
    w(x)^\top u
    \geq -\frac{p_k}{2} h(p,v).
\end{equation}
For the control input, a simple PD controller $\bar{u}$ was used to navigate the drone to the goal point. In terms of control constraints, the QP has been extended to enforce a Euclidean bound on the velocity and acceleration, i.e. $\|u\| \leq a_{max}$, without compromising the feasibility or safety of the program. The formal proof for this is given in \cite{chen_safer-splat_2025} under Corollary 1.
%
%

\textit{Implementation:} 
The QP was implemented using Clarabel \cite{clarabel_2024}, an interior point numerical solver for convex optimization problems. The constraint evaluations were performed on GPU using PyTorch, and the simulations were run on a computer with an NVIDIA RTX 3050. The map used for the simulation was a synthetic data set we refer to as Stonehenge. The Stonehenge scene was trained using Nerfstudio \cite{nerfstudio} and specifically Splatfacto, which is Nerfstudio's implementation of 3D Gaussian splatting based on the original 3DGS paper \cite{kerbl_3d_2023}. To avoid highly anisotropic Gaussians, a scale regularizer \cite{xie_physgaussian_2023} and a scale adjustments were used during training. Highly anisotropic Gaussians can cause numerical issues, like floating point inaccuracies, because eigenvalues of $\Sigma^{-1}$ can easily take on values in the order of $10^{16}$ if left unconstrained. After training, our Stonehenge map consisted of around 170k Gaussians, with the largest eigenvalues in the order of $10^{8}$.

\textit{Experiment I:}
For the first experiment, shown in Figure \ref{fig_quali_difference}, we provide a qualitative comparison between (1) the default PD controller with and without our filter and (2) the performance of SAFER-Splat's filter versus ours. Note that in both cases, the robot was assumed to be a point object. Also, an adaptive filter to only consider Gaussian splats within a certain distance of the robot had been implemented. The image on the left (1) shows that with our proposed filter, the robot is able to avoid the obstacles and generate a safe and smooth trajectory. 
The image on the right (2) demonstrates the advantage of the collision cone-based CBF relative to a distance-based formulation. With constraints activating proactively, our approach yields smoother trajectories than those obtained with SAFER-Splat.

\textit{Experiment II:}
For the second experiment, we provide a quantitative comparison to be able to compare the efficiency, hence planning time and jerk of both methods. Like before, the point robot assumption and adaptive filter were used. This time, 50 trajectories were simulated with starting points spread evenly around the map, and goal points on the opposite sides (\SI{180}{\degree} shifted). The top image of Figure \ref{fig_qp_safer_cc} shows the planning time per trajectory plus the overall average for both SAFER-Splat and our method. It can be seen that our filter reduces planning time by a factor of 3 without compromising on success rate or safety. The bottom image of Figure \ref{fig_qp_safer_cc} shows a jerk/smoothness distribution over the generated trajectories. The jerk has been quantified in three different ways; normalized jerk (nJ) with respect to the traveled distance, the root mean square of the jerk (RMS J) and the classic minimum-jerk objective $\int\|j\|^2dt$, representing the integrated squared jerk (ISJ). The plots suggest that the trajectories of our method are smoother but also shorter which is what can be seen in the normalized jerk plot. The shortness of the trajectories is due to the proactive constraint activation from the collision cones as they terminate earlier if infeasible. The integrated jerk of our method is slightly lower but shows a heavier tail reinforcing that the overall jerk energy is lower but also indicating sharp corrections, which we could observe at the goal points.
\begin{figure}[htp]
    \includegraphics[width=\columnwidth] {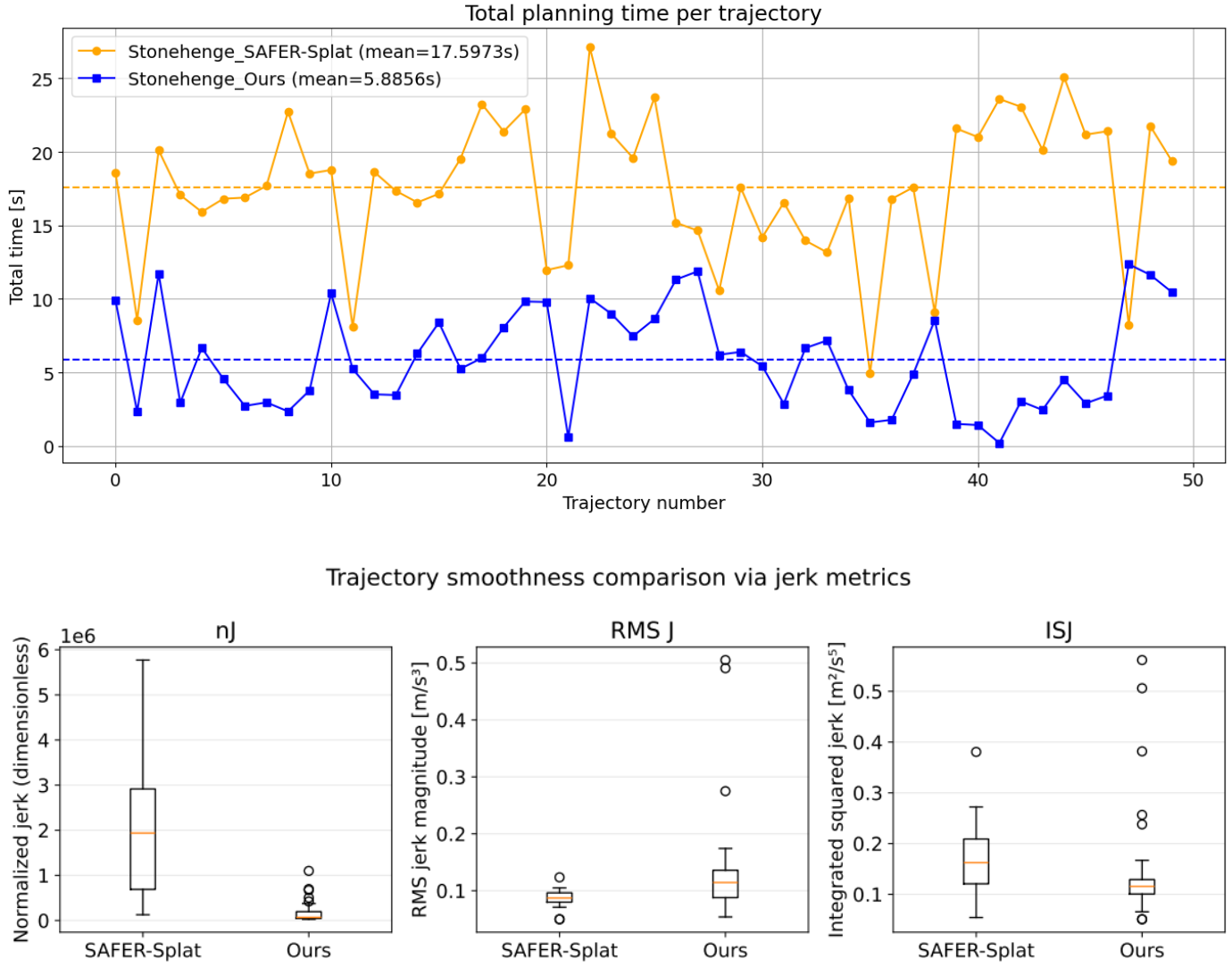} 
    \caption{Efficiency comparison between our method and SAFER-Splat. A) shows that our method reduces planning times by a factor of 3. B) indicates that our approach generates less jerk but performs sharper corrections. The approach condition divides the space into an active and inactive half-space.}
    \label{fig_qp_safer_cc}
\end{figure}
\textit{Observations:} When comparing our method to SAFER-Splat we saw that the success rates were almost identical across different trajectory sample sizes. One difference that we noticed is that when the start configuration is initialized very close to an obstacle, the robot with our filter does not have enough control authority, hence lacks the angular velocity needed, to escape the collision cone within one time step. This causes $h$ to be negative and thus the problem becomes infeasible because the CBF constraints in the QP are implemented as hard constraints. While this behavior can be fixed by initializing the robot further away from an obstacle or by introducing slack variables, SAFER-Splat performed better in that regard as it was able to avoid obstacles in such cases.
\section{CONCLUSIONS}
This letter presented a perception‑driven safety filter that converts a 3D Gaussian Splatting (3DGS) ellipsoid into an analytic collision cone control barrier function (CBF) and embeds it within a QP framework. 
By leveraging the geometric properties of a 3D Gaussian splat, we formulated a ray–ellipsoid intersection problem through which we obtained necessary and sufficient conditions for a forward collision cone. The complement defines an analytical-save filter that activates proactively as the robot approaches an obstacle. Because of the first-order nature of the CBF, the filter is low-cost, making it suitable for real-time implementation.
We demonstrated in simulation that the collision cone-based filter enables collision-free trajectories in a cluttered environment. We compared it to a state-of-the-art 3DGS planner, where we demonstrated that our method generates smoother trajectories while reducing planning time by a factor of 3 and maintaining safety guarantees. In the future, we plan to incorporate slack variables and integrate our filter into a model predictive control (MPC)-based framework for better constraint handling.

\addtolength{\textheight}{-12cm}   



\section*{APPENDIX} \label{app_minkowski}
In (\ref{ref_dimens}) we showed that the  Minkowski sum of $r_{\mathcal{E}_r}$ and $r_{\mathcal{S}_c}$ was $c_{M} =  r_{\mathcal{E}_r} \oplus r_{\mathcal{S}_c} = c + d/\sqrt{n^\top(S^\top S)n}$. To represent $c_M$ in the original coordinates, we introduce the following notation.

Besides $\beta = v^\top Av$ and $\delta = r^\top Av$ we define the A-orthogonal projection of $r$ onto $v^\perp$ given by $t=r-v \frac{\delta}{\beta}$. Hence, the robot's radial size in the original coordinates becomes $\psi(p,v) = \frac{\sqrt{t^\top At}}{\|t\|_2}$. Finally, $c_M(p,v) = c + \rho\psi(p,v)$, where $\rho$ is the Euclidean robot sphere radius.
The simpler, more conservative expression doesn't change with the transformation, thus, in that case, $c_M = c + \frac{\rho}{s_{min}}$.
The robot-inflated version of (\ref{eq_cbf_candidate}) therefore becomes:
\[
    h(p,v) = \underbrace{(v^\top Av)}_\beta \underbrace{(r^\top Ar - c_M^2)}_\eta - \underbrace{(r^\top Av)^2}_{\delta^2} \geq 0
\]
The Lie derivatives for this case can then be expressed as
\begin{align*}
    L_fh =& \enspace 2\Bigl( (\eta Av - \delta Ar)f_v(x) \\
    &- \beta c_M \bigl( (\nabla_p c_M)^\top f_p(x) + (\nabla_v c_M)^\top f_v(x) \bigl) \Bigl), \\
     L_gh = & \enspace 2\bigl( \eta Av - \delta Ar -\beta c_M(\nabla_v c_M)^\top \bigl) g_v(x).
\end{align*}
where the gradients of $c_M$ are given by
\[
    \nabla_p c_M = \rho\frac{\partial t}{\partial r}^\top \nabla_t \psi,
    \qquad \nabla_v c_M = \rho\frac{\partial t}{\partial v}^\top \nabla_t \psi
\]
with $\nabla_t \psi$ and the partial derivatives of $t$ as follows
\[
    \nabla_t \psi = \frac{At}{\|t\| \sqrt{t^\top At}} - \frac{\sqrt{t^\top At} }{\|t\|^3}t,
\]
\[
    \frac{\partial t}{\partial r} = I - \frac{v(Av)^\top}{\beta},
\]
\[
    \frac{\partial t}{\partial v} = -\frac{v(\beta Ar - 2\delta Av)^\top}{\beta^2} -\frac{\delta}{\beta}I.
\]
As before, these derivations assume that the control input only affects the linear acceleration term, but else are valid for any control system in the form of \ref{eq_affine_control_system}.


\bibliographystyle{IEEEtran}
\bibliography{root}

\end{document}